\DeclarePairedDelimiterX{\Paren}[1]{(}{)}{#1}
\DeclarePairedDelimiterX{\Brace}[1]{\{}{\}}{#1}
\DeclarePairedDelimiterX{\Brack}[1]{[}{]}{#1}
\DeclarePairedDelimiterX{\Abs}[1]{\rvert}{\lvert}{#1}
\DeclarePairedDelimiterX{\Norm}[1]{\lVert}{\rVert}{#1}
\DeclarePairedDelimiterX{\Avg}[1]{\langle}{\rangle}{#1}
\DeclarePairedDelimiterX{\Inner}[2]{\langle}{\rangle}{#1\,\delimsize\vert\,#2}
\newcommand*{\delimsize}{}
\DeclareMathOperator*{\argmin}{arg\,min}
\DeclareMathOperator{\Prox}{prox}
\newcommand*{\TransposeLetter}{\mathrm{\top}}
\newcommand*{\T}{^{\TransposeLetter}}
\newcommand*{\Vector}[1]{\boldsymbol{#1}}
\newcommand*{\Matrix}[1]{\mathbf{#1}}
\newcommand*{\HilbertSpace}[1]{\mathcal{#1}}
\newcommand*{\eg}{e.g., }
\newcommand*{\ie}{i.e., }
\newtheorem{theorem}{Theorem}
\title{MCN\lowercase{et}: Measurement-Consistent Networks via a Deep Implicit
Layer for Solving Inverse Problems}
\name{Rahul Mourya,\,\,
Jo\~ao F.\ C.\ Mota\thanks{Work supported by EPSRC's New Investigator Award
	(\href{https://gow.epsrc.ukri.org/NGBOViewGrant.aspx?GrantRef=EP/T026111/1}{EP/T026111/1}).}}
\address{Institute of Sensors, Signals, and Systems,\, Heriot-Watt
University,\, Edinburgh,\, UK}
\definecolor{darkredc}{RGB}{90,0,0}
\definecolor{darkgreen}{RGB}{0,90,0}
\definecolor{darkblue}{RGB}{0,0,90}			
\definecolor{thisblue}{rgb}{0,0,.4}
\definecolor{myred}{RGB}{153,0,0}
\definecolor{myblue}{RGB}{0,0,153}
\newcommand{\mypar}[1]{{\bf #1.}}
\begin{document}
\ninept
\setlength{\abovedisplayskip}{5pt}
\setlength{\belowdisplayskip}{5pt}
\linespread{0.9}
\setitemize{noitemsep,topsep=2pt,parsep=2pt,partopsep=2pt}

\maketitle
\begin{abstract}
	End-to-end deep neural networks (DNNs) have become the state-of-the-art (SOTA) for solving
	inverse problems. Despite their outstanding performance, during deployment,
	such networks are sensitive to minor variations in the testing pipeline and
	often fail to reconstruct small but important details, a feature critical in
	medical imaging, astronomy, or defence. Such instabilities in DNNs can be explained by
	the fact that they ignore the forward measurement model during deployment, and
	thus fail to enforce consistency between their output and the input
	measurements. To overcome this, we propose a framework
	that transforms any DNN for inverse problems into a measurement-consistent one.
	This is done by appending to it an implicit layer (or deep equilibrium network)
	designed to solve a model-based optimization problem. The implicit
	layer consists of a shallow learnable network that can be integrated into the end-to-end
	training while keeping the SOTA DNN fixed. Experiments on single-image super-resolution show that the proposed framework leads to significant improvements in reconstruction quality and robustness over the SOTA DNNs.
\end{abstract}
\begin{keywords}
	Image super-resolution, ADMM, implicit layers, deep equilibrium
	networks, fixed-point iterations.
\end{keywords}
\section{Introduction}
\label{sec:intro}
\vspace{-1mm}
We consider the problem of reconstructing a vector $\Vector{x}^\star\in
\mathbb{R}^n$ of which we have only a few linear, noisy measurements
$\Vector{b} = \Matrix{A}\Vector{x}^\star + \Vector{\eta}$, where $\Matrix{A} \in
\mathbb{R}^{m\times n}$, with $m \ll n$, and $\Vector{\eta} \in \mathbb{R}^m$ is noise.
Such a linear inverse problem is challenging since, even when there is no
noise, the system $\Vector{b} = \Matrix{A}\Vector{x}$ contains infinitely many solutions, among
which the one we seek, $\Vector{x}^\star$. Classical approaches~\cite{Combettes2005,Beck2009,Mourya2015b} attempt to
reconstruct $\Vector{x}^\star$ by solving an optimization problem that balances
(or constrains) measurement fidelity and a regularizer encoding prior knowledge about $\Vector{x}^\star$, e.g., sparsity in a domain. A long-standing
challenge is to devise regularizers that, while computational tractable,
effectively describe classes of signals.

\begin{figure}[t]
	\centering
	\includegraphics[width=0.9\linewidth]{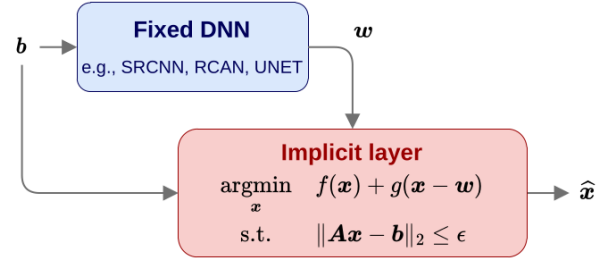}
	\vspace{-2mm}
	\caption{MCNet framework. The output $\Vector{w}$ of a fixed DNN is
		fed into an implicit layer designed to solve an inverse problem that ensures
		measurement consistency (via its constraint), fidelity to $\Vector{w}$ (via $g$), and 
		prior knowledge of $\Vector{x}$ (via $f$). $f$ and $g$ can be implemented as
		DNNs, and the entire pipeline is
		end-to-end trainable.}
	\label{fig:networkarchitecture}
\end{figure}
State-of-the-art (SOTA) methods for image reconstruction tasks such as
super-resolution, MRI reconstruction, or inpainting, leverage the power of deep neural networks
(DNNs) to directly learn a map from the measurements to the corresponding
reconstructed signal \cite{Nimisha2017,Dong2016,Qin19-ConvolutionalRecurrentNeuralNetworksForDynamicMRImageReconstruction,Sriram2020}. Despite their extraordinary performance, such end-to-end
schemes are unreliable, failing, for example, to detect small, unusual signals
absent in the training data~\cite{Antun2020}. Part of the reason is that during
deployment they ignore the forward measurement model, that is, $\Vector{b} =
\Matrix{A}\Vector{x} + \Vector{\eta}$, even when the model is
known~\cite{Vella2021b,Vella2021a}. Moreover, end-to-end DNNs for image
reconstruction require retraining whenever the measurement operator
$\Matrix{A}$ changes. Both shortcomings are absent in classical
optimization-based methods, albeit at the cost of poorer performance.

Recent work has thus tried to blend classical and DNN approaches to get the
benefits of both. We categorize such methodologies as 
deep unrolling networks (DUNets), plug-and-play methods (PnP),
post-processing (PP), and deep equilibrium models (DEQs). We briefly review
them in \S\ref{sec:relatedwork}.

\mypar{Our scheme}
The scheme we propose combines ideas from PnP, PP, and DEQs, and is represented
in Fig.~\ref{fig:networkarchitecture}. The measurement vector $\Vector{b}$
is passed through an end-to-end DNN, e.g., RCAN~\cite{Zhang2018} for
super-resolution or UNET \cite{Ronneberger2015} for MRI reconstruction. Its output 
$\Vector{w}$ will typically be close to $\Vector{x}^\star$ but
measurement inconsistent~\cite{Vella2021b}. To leverage the good quality of
$\Vector{w}$, the knowledge of the forward measurement model, and any other prior
information about $\Vector{x}^\star$, we pass $\Vector{b}$ and $\Vector{w}$ through an implicit layer that solves an optimization problem integrating all these elements. We refer to
the resulting scheme as \textit{measurement-consistent network} (MCNet), and
describe its architecture and training procedure in \S\ref{sec:sr_results}.
While MCNet can be applied to generic image reconstruction problems, this paper
focuses on single-image super-resolution. Because MCNet leverages a
high-quality DNN network, e.g., RCAN~\cite{Zhang2018}, its implicit layer
requires only a shallow network, usually with only $4-6$ layers, making it
simpler than alternative DEQ approaches~\cite{Gilton2021} or PnP
methods~\cite{Dong2018,Ryu2019}. Thus, it also requires less
training data than prior approaches.

\mypar{Contributions} 
Our contributions are as follows:
\begin{itemize}[leftmargin=*]
	\item We propose a framework, MCNet, for transforming any DNN for linear inverse
	problems into a measurement-consistent one.  
	\item By ensuring measurement-consistency, MCNet is more robust to small input perturbations than SOTA DNNs .
	\item We design an efficient training algorithm based on DEQ techniques.
	By interpreting an ADMM algorithm to solve the problem in the
	implicit layer as a fixed-point iteration, we leverage the implicit
	function theorem to train the network without ever unrolling
	the iterations. The number of iterations of the fixed-point algorithm can
	be modified during testing without significantly deteriorating the
	quality of the output. 
	\item We establish conditions under which the fixed-point algorithm 
	converges in the forward and backward passes.
	\item Extensive results on single-image super-resolution show
	that, with respect to prior methods, MCNet not only makes the DNN outputs
	measurement-consistent, but also yields significant gains in
	quality, despite having significantly fewer learnable parameters in the implicit layer.
\end{itemize}

\section{Related work}
\label{sec:relatedwork}

We briefly overview work that integrates DNNs into classical optimization-based
approaches.

\mypar{Deep unrolling networks (DUNets)}
First proposed in~\cite{Gregor2010}, DUNets are the current SOTA methods for MRI
reconstruction, low-dose CT, and emission tomography reconstruction. They are
designed by unrolling a few iterations of an iterative algorithm, e.g.,
forward-backward splitting~\cite{Combettes2005}, in which a proximal step is
replaced by a DNN. The number of unrolled iterations is typically small, $5$ to
$10$. And to reduce the memory requirements during training, parameters are
shared between layers. 
A consequence is that outputs from DUNets may not be
measurement-consistent~\cite{Vella2021a}. DUNets are also inflexible. For
example, the number of unrolled iterations has to be the same during training
and testing~\cite{Gilton2021}, and modifying the operator $\Matrix{A}$ requires
retraining from scratch.  

\mypar{Plug-and-play (PnP) methods}
PnP methods~\cite{Venkatakrishnan2013} modify model-based iterative algorithms,
e.g., forward-backward splitting or ADMM, by replacing one or more proximal
operators with nonconvex, SOTA denoisers like BM3D~\cite{Dabov2007} or
pre-trained DNNs, e.g., DNCNN~\cite{Zhang2017}. The resulting algorithms yield
excellent results on a variety of tasks, \eg  image deblurring, MRI, low-dose
CT reconstruction, and super-resolution~\cite{Dong2018}. To converge, however,
they require assumptions on the denoiser networks, e.g.,
contractivity~\cite{Ryu2019}. Unlike DUNets, PnP methods can handle different
operators $\Matrix{A}$ during training and testing. However, to achieve
reconstruction performance similar to end-to-end DNNs, they need to be trained
on very large datasets.

\mypar{Post-processing (PP)}
While PnP methods usually output measurement-consistent signals, other
strategies include designing data-consistency
layers~\cite{Aggarwal2019,Schempler18-ADeepCascadeOfConvolutionalNeuralNetworksForDynamicMRImageReconstruction,Qin19-ConvolutionalRecurrentNeuralNetworksForDynamicMRImageReconstruction}
or using cycle consistency
losses~\cite{Oh20-UnpairedDeepLearningForAcceleratedMRIUsingOptimalTransportDrivenCycleGAN}.
A particular effective strategy, however, is post-processing (PP) the output $\Vector{w}$ of an existing DNN
with a model-based optimization problem~\cite{Vella2019}:
\begin{align}
	\widehat{\Vector{x}}(\Vector{w}) 
	\in 
	\begin{array}[t]{cl}
		\underset{\Vector{x}}{\arg\min}  & \Norm{\Vector{x}}_\text{{TV}} + \beta
		\Norm{\Vector{x} - \Vector{w}}_{\text{TV}} \\
		\text{s.t.}  & \Matrix{A}\Vector{x} = \Vector{b}  \,,
	\end{array}
	\label{eq:tv_tv}
\end{align}
where $\Norm{\cdot}_{\text{TV}}$ is a total-variation (TV)
norm~\cite{Chambolle1997}, and  $\beta > 0$ a tradeoff parameter. Such PP
framework achieves better results than existing DNN on several tasks, including
image super-resolution \cite{Vella2021b}, MRI reconstruction
\cite{Vella2021a}, hyperspectral image super-resolution \cite{Vella2021c}.
Moreover, PP can handle slight variations in $\Matrix{A}$ \cite{Vella2021b}. 
A shortcoming, however, is the rigid selection of an explicit function, such as
the TV-norm, and the inability to train the network end-to-end. Our work builds
on PP by replacing the TV-norms in~\eqref{eq:tv_tv} with shallow
neural networks, and by leveraging techniques from deep equilibrium
networks to make the entire system end-to-end trainable.

\mypar{Deep equilibrium networks (DEQs)}
DEQs~\cite{Bai2019} achieve SOTA results on several vision
\cite{Bai2020,Gilton2021}, language \cite{Bai2019}, and graph \cite{Gu2020}
tasks. The output of a DEQ (or implicit) layer, rather than being defined explicitly as in
conventional DNNs, is defined implicitly. For example, if $\Vector{w}$
represents its input, the output of an implicit layer is the solution of
$h(\Vector{x}; \Vector{w}) = 0$, where $h$ is a generic differentiable map.
This generic model captures fixed-point equations, generalizing recurrent
backpropagation models \cite{Liao2018}, or optimality conditions of
optimization problems, effectively enabling the inclusion of hard-constraints
in DNNs~\cite{Amos17-OptNet-DifferentiableOptimizationAsALayerInNeuralNetworks}.
Unlike DUNets, the number of iterations to solve $h(\Vector{x}; \Vector{w}) =
0$ in the forward pass can be as large as required, without incurring any
additional memory footprint during the backward pass. This is because the
gradient can be propagated through the implicit layer via the implicit
function theorem. Thus, an implicit layer can be sandwiched
between conventional, explicit layers, and the whole network trained end-to-end. 

DEQs have been applied to solving inverse problems in~\cite{Gilton2021}. The
main difference with respect to our work is that they apply DEQ machinery directly to a
PnP method: ADMM solving a LASSO-type problem with the proximal operator
replaced by a DNN. Thus, they fail to leverage the power of end-to-end SOTA
DNNs. Our method, in contrast, leverages such methodologies and, as a result,
requires a significantly shallower DNN, with lower computational cost
during both training and testing, while achieving similar results, or better.
\vspace*{-10mm}

\section{\texorpdfstring{Proposed Framework: MCN\lowercase{et}}{Proposed Framework: MCNet}}
\label{sec:proposed_framework}
\vspace*{-4mm}
As depicted in Fig.~\ref{fig:networkarchitecture}, we propose to append an
implicit layer to the output of a fixed, pre-trained DNN, whose output is
$\Vector{w}$. The implicit layer solves a constrained optimization problem,
\begin{equation}
	\widehat{\Vector{x}}(\Vector{w}) \in 
	\argmin_{\Vector{x} \in \HilbertSpace{S}} f(\Vector{x}) + \beta g(\Vector{x}
	- \Vector{w}) \,,
	\label{eq:implicit_problem}
\end{equation}
where $\HilbertSpace{S} = \{\Vector{x}: \Norm{\Vector{b} - \Matrix{A}
	\Vector{x}}_{2} \leq \varepsilon \}$, for $\varepsilon \geq 0$, and $f, g\, :\,
\mathbb{R}^n \to \mathbb{R}$ are generic, possibly nonconvex, functions. We
estimate $\hat{\Vector{x}}$ in \eqref{eq:implicit_problem} by PnP-ADMM iterations similar to~\cite{Ryu2019} and replace the proximal operators of $f$ and/or $g$ by simple CNN denoisers. In this work, we consider simpler case and use $g(\cdot) = \Norm{\cdot}_2^2$, . The resulting algorithm can be interpreted as a fixed-point iteration.

\mypar{ADMM as a fixed-point iteration}
For simplicity, we consider the case $\varepsilon = 0$. Cloning the variable
$\Vector{x}$ into $\Vector{u}$ and setting $g(\cdot) = \Norm{\cdot}_2^2$, we rewrite~\eqref{eq:implicit_problem} as
\begin{equation}
	\label{eq:case1}
	\begin{array}[t]{ll}
		\underset{\Vector{x},\Vector{u}}{\text{minimize}} 
		&
		f(\Vector{u}) + \frac{\beta}{2} \Norm{\Vector{x} - \Vector{w}}^{2}_{2} + \iota_{\HilbertSpace{S}}(\Vector{x})
		\\
		\text{subject to}
		&
		\Vector{x} = \Vector{u}\,,
	\end{array}
\end{equation} 
where $\iota_{\HilbertSpace{S}}(s) = 0$ if $s \in \HilbertSpace{S}$, and
$\iota_{\HilbertSpace{S}}(s) = +\infty$ if $s \not\in \HilbertSpace{S}$, is the indicator function of $\HilbertSpace{S}$. 
Applying ADMM to~\eqref{eq:case1} results in
\vspace{1mm}
\begin{subequations}
	\label{eq:admminit}
	\begin{align}
		\Vector{u}^{(k+1)} &=\argmin_{\Vector{u}} f(\Vector{u}) + \frac{\rho}{2}
		\Norm{ \Vector{u} - \Vector{p}^{(k)}  }^{2}_{2} = \Prox_{\frac{f}{\rho}}
		\big( \Vector{p}^{(k)} \big)
		\label{eq:admminitu}
		\\
		\Vector{x}^{(k+1)} &=\argmin_{\Vector{x} \in \HilbertSpace{S}}\,\,\, 
		\frac{1}{2} \Norm{ \Vector{x}
			- \Vector{q}^{(k)}}^{2}_{2} = \Pi_{\HilbertSpace{S}} \big[\Vector{q}^{(k)} \big] 
		\label{eq:admminitx}
		\\
		\Vector{\lambda}^{(k+1)} & =  \Vector{\lambda}^{(k)} + \Vector{x}^{(k+1)} -
		\Vector{u}^{(k+1)}\,,
		\label{eq:admminitl}
	\end{align}
	\label{eq:admm_iters}
\end{subequations}
where $\rho > 0$ is the augmented Lagrangian parameter, $\Prox_{f}(\Vector{v}) :=
\argmin_{\Vector{x}}\,\, f(\Vector{x}) + \frac{1}{2}\Norm{\Vector{x} -
	\Vector{v}}_2^2$ the proximal operator of $f$ at $\Vector{v}$, and
$\Pi_{\HilbertSpace{S}}[\cdot]$ the projection operator (in this case,
it has a closed-form solution). We defined
$\Vector{p}^{(k)} := \big(\Vector{x}^{(k)} + \Vector{\lambda}^{(k)} \big)$ and
$\Vector{q}^{(k)} :=\big( {\beta \Vector{w} + \rho(\Vector{u}^{(k+1)} -
	\Vector{\lambda}^{(k)})} \big) / \big(\beta + \rho \big)$.

The key step is replacing $\Prox_{f/\rho}$ by a neural network $R_{\Vector{\theta}}$ with
parameters $\Vector{\theta}$. Performing this substitution and plugging
$\Vector{u}^{(k+1)}$ directly into  $\Vector{q}^{(k)}$ and~\eqref{eq:admminitl} yields:
\begin{subequations}
	\label{eq:fixedpointadmm}
	\begin{align}
		\Vector{x}^{(k+1)} 
		&= 
		\Pi_{\HilbertSpace{S}} 
		\Bigg[
		\frac
		{\beta \Vector{w} + \rho \Big( R_{\Vector{\theta}}\big(\Vector{x}^{(k)} +
			\Vector{\lambda}^{(k)}\big) - \Vector{\lambda}^{(k)}\Big)}
		{\beta + \rho} 
		\Bigg] 
		\label{subeq:admm_1_1} 
		\\
		\Vector{\lambda}^{(k+1)} 
		&= 
		\Vector{\lambda}^{(k)} 
		- R_{\Vector{\theta}} \big(\Vector{x}^{(k)} + \Vector{\lambda}^{(k)}\big)
		\notag
		\\
		&\qquad
		+ 
		\Pi_{\HilbertSpace{S}} 
		\Bigg[
		\frac
		{\beta \Vector{w} + \rho \Big( R_{\Vector{\theta}}\big(\Vector{x}^{(k)} +
			\Vector{\lambda}^{(k)}\big) - \Vector{\lambda}^{(k)}\Big)}
		{\beta + \rho} 
		\Bigg]\,.
		\label{subeq:admm_1_2}
	\end{align}
\end{subequations}
As the right-hand side of~\eqref{eq:fixedpointadmm} depends
only on variables indexed by $k$, i.e., $\Vector{x}^{(k)}$ and
$\Vector{\lambda}^{(k)}$, those equations can be interpreted as a fixed-point
iteration on the joint variable $\Vector{z} = \big(\Vector{x}, \Vector{\lambda}
\big)$. More compactly, they can be expressed as 
\begin{equation}
	\Vector{z}^{(k+1)} = F_{\Vector{\theta}}\Big(\Vector{z}^{(k)}; \Vector{w},
	\Matrix{A}, \Vector{b} \Big)\,,
	\label{eq:case1_fixed_point}
\end{equation}
where $F_{\Vector{\theta}}\,:\,\mathbb{R}^{n}\times\mathbb{R}^n \to
\mathbb{R}^n \times \mathbb{R}^n$ depends on the fixed parameters $\Vector{w}$,
$\Matrix{A}$, and $\Vector{b}$, and on the parameters
$\Vector{\theta}$ of the network $R_{\Vector{\theta}}$. If $F_{\Vector{\theta}}$ is contractive, then it has a
unique fixed-point $\Vector{z}^{(\infty)} = \big(\Vector{x}^{(\infty)},
\Vector{\lambda}^{(\infty)} \big)$, and the
iterations~\eqref{eq:case1_fixed_point} [equivalently, \eqref{eq:fixedpointadmm}]
converge to it. Below, we establish the conditions under which this holds.
A similar derivation applies to the case where $\varepsilon > 0$.

\mypar{Forward and backward passes}
In the forward-pass, if we directly use equations
\eqref{subeq:admm_1_1}-\eqref{subeq:admm_1_2} to compute the fixed-point of
\eqref{eq:case1_fixed_point}, convergence can be very slow. Instead, as
in~\cite{Bai2019}, we apply Anderson acceleration~\cite{Walker2011}, which uses
a few past iterations, typically 4-6, to identify promising directions to move
along in each iteration. In all our experiments, we set $\rho = 1$ and find that 150-200 iterations of Anderson acceleration achieve good accuracy. 

In the backward-pass, we need to compute the gradient of the loss $\ell$ with
respect to $\Vector{\theta}$, \ie $\partial \ell / \partial \Vector{\theta}$.
Following~\cite{Bai2019}, we leverage the fact that $\Vector{x}^{(\infty)} :=
\Vector{x}^{(\infty)}(\Vector{w},  \Matrix{A}, \Vector{b}; \Vector{\theta})$ is
a fixed-point of $F_{\Vector{\theta}}$, and calculate the required gradient
without unrolling the fixed-point iterations. Using the chain rule and 
the convention that $\partial/\partial{\cdot}$ represents a row vector, we write
${\partial \ell} / {\partial \Vector{\theta}}$ as a Jacobian-vector product:
\begin{equation}
	\label{eq:chain_rule}
	\frac{\partial \ell}{\partial \Vector{\theta}} = {\frac{\partial
			\Vector{x}^{(\infty)}}{\partial \Vector{\theta}}}\T \frac{\partial
		\ell}{\partial \Vector{x}^{(\infty)}}\,.
\end{equation}
To compute the Jacobian $\partial \Vector{x}^{(\infty)} / \partial
\Vector{\theta}$, we apply implicit differentiation to \eqref{eq:case1_fixed_point} at
the fixed-point $\Vector{z}^{(\infty)} = (\Vector{x}^{(\infty)}, 
\Vector{\lambda}^{(\infty)})$. After simplifying,
\begin{equation}
	\label{eq:implicit_differentiation}
	{\frac{\partial \Vector{x}^{(\infty)}}{\partial  \Vector{\theta}}} = \Bigg[
	\Matrix{I} - {\frac{\partial  F_{\Vector{\theta}}(\Vector{x};
			\cdot)}{\partial \Vector{x}}} \Bigg|_{\Vector{x}^{(\infty)}} \Bigg]^{-1} \Bigg[
	\frac{\partial F_{\Vector{\theta}}(\Vector{x}^{(\infty)}; \cdot)}{\partial
		\Vector{\theta}} \Bigg]\,.
\end{equation}
Plugging  \eqref{eq:implicit_differentiation} into \eqref{eq:chain_rule} yields:
\begin{equation}
	\label{eq:inv_jacb_vec}
	\frac{\partial \ell}{\partial \Vector{\theta}} = \Bigg[ {\frac{\partial
			F_{\theta}(\Vector{x}^{(\infty)}; \cdot)}{\partial \Vector{\theta}}}\Bigg]\T
	\underbrace{
		\Bigg[ \Matrix{I} - {\frac{\partial  F_{\Vector{\theta}}(\Vector{x};
				\cdot)}{\partial \Vector{x}}} \Bigg|_{\Vector{x}^{(\infty)}} \Bigg]^{-\top}
		\frac{\partial \ell}{\partial \Vector{x}^{(\infty)}}
	}_{=: \Vector{s}^{(\infty)}}\,,
\end{equation}
where $\Matrix{I}$ is the identity matrix. 
The quantity $\Vector{s}^{(\infty)}$ in \eqref{eq:inv_jacb_vec}  can be
computed as the fixed-point of
\begin{equation}
	\Vector{s}^{(k+1)} = 
	\Bigg[ {\frac{\partial F_{\Vector{\theta}}(\Vector{x}; \cdot)}{\partial
			\Vector{x}}} \Bigg|_{\Vector{x}^{(\infty)}} \Bigg]\T \Vector{s}^{(k)} +
	\frac{\partial \ell}{\partial \Vector{x}^{(\infty)}}\,.
	\label{eq:grad_fixed_point}
\end{equation}
We again use Anderson acceleration to compute this fixed-point.
This can be done efficiently with recent auto-differentiation tools that compute
the Jacobian-vector product in \eqref{eq:grad_fixed_point}. Once
$\Vector{s}^{(\infty)}$ is computed, we replace it
in~\eqref{eq:inv_jacb_vec} and obtain $\partial \ell / \partial
\Vector{\theta}$. In all our training, we find that 50-80 iterations of
Anderson acceleration are enough to achieve good accuracy in both the forward and backward passes.

\mypar{Convergence of fixed-point iterations}
Fixed-point theory guarantees that the iterations \eqref{eq:case1_fixed_point}
and \eqref{eq:grad_fixed_point} converge to a unique fixed-point if the
associated iteration maps are contractive. The following result slightly
modifies the argument in~\cite{Ryu2019} to show that $F_{\Vector{\theta}}$ is
contractive.

\begin{theorem}
	If the network $R_{\Vector{\theta}}$ is contractive, then the map
	$F_{\Vector{\theta}}$ is contractive, and the iterations
	\eqref{eq:case1_fixed_point} converge to a unique fixed-point.
\end{theorem}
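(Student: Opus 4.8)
\emph{Plan.} The map $F_{\Vector{\theta}}$ acts on the joint variable $\Vector{z}=(\Vector{x},\Vector{\lambda})$ and is assembled from three pieces: the learned network $R_{\Vector{\theta}}$, the affine map $\Vector{y}\mapsto(\beta\Vector{w}+\rho\Vector{y})/(\beta+\rho)$, and the projection $\Pi_{\HilbertSpace{S}}$. My plan is to reduce the whole statement to Lipschitz estimates on these pieces and then invoke the Banach fixed-point theorem. First I would record that $\HilbertSpace{S}=\{\Vector{x}:\Norm{\Vector{b}-\Matrix{A}\Vector{x}}_2\le\varepsilon\}$ is closed and convex, so $\Pi_{\HilbertSpace{S}}$ is the proximal operator of the convex indicator $\iota_{\HilbertSpace{S}}$ and is therefore firmly nonexpansive. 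Equivalently, the entire $\Vector{x}$-update \eqref{eq:admminitx} is $\Prox_{h/\rho}$ for the $\beta$-strongly convex $h(\Vector{x})=\tfrac{\beta}{2}\Norm{\Vector{x}-\Vector{w}}_2^2+\iota_{\HilbertSpace{S}}(\Vector{x})$; strong convexity turns this step into a genuine contraction with factor $\rho/(\beta+\rho)<1$. That factor is the quantitative engine of the proof and is precisely where the weight $\beta$ enters.

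Next I would compute $F_{\Vector{\theta}}(\Vector{z})-F_{\Vector{\theta}}(\Vector{z}')$ componentwise. Writing $\Delta R:=R_{\Vector{\theta}}(\Vector{x}+\Vector{\lambda})-R_{\Vector{\theta}}(\Vector{x}'+\Vector{\lambda}')$, $\Vector{d}:=\Delta\Vector{\lambda}-\Delta R$, and letting $\Delta P$ denote the difference of the two projected points, \eqref{subeq:admm_1_1}--\eqref{subeq:admm_1_2} give $\Delta\Vector{x}^{+}=\Delta P$ and $\Delta\Vector{\lambda}^{+}=\Vector{d}+\Delta P$. Expanding the squared norm and applying firm nonexpansiveness of $\Pi_{\HilbertSpace{S}}$ in the form $\Norm{\Delta P}_2^2\le\langle\Delta P,\Delta\Vector{q}\rangle$ with $\Delta\Vector{q}=-\tfrac{\rho}{\beta+\rho}\Vector{d}$, the cross term collapses and I expect the clean, unconditional inequality
\begin{equation}
	\Norm{F_{\Vector{\theta}}(\Vector{z})-F_{\Vector{\theta}}(\Vector{z}')}_2^2\;\le\;\Norm{\Vector{d}}_2^2-\tfrac{2\beta}{\rho}\Norm{\Delta P}_2^2 .
	\label{eq:keyineq}
\end{equation}
The two terms on the right encode the competition driving the whole argument: $\Norm{\Vector{d}}_2^2$ is a potential expansion from the dual coupling, while $-\tfrac{2\beta}{\rho}\Norm{\Delta P}_2^2$ is the contraction harvested from strong convexity.

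The third step is to close \eqref{eq:keyineq} against $\Norm{\Vector{z}-\Vector{z}'}_2^2=\Norm{\Delta\Vector{x}}_2^2+\Norm{\Delta\Vector{\lambda}}_2^2$. Here I would invoke the hypothesis that $R_{\Vector{\theta}}$ is contractive, i.e. $\Norm{\Delta R}_2\le L\Norm{\Delta\Vector{x}+\Delta\Vector{\lambda}}_2$ with $L<1$, to bound $\Norm{\Vector{d}}_2$ and the cross terms, together with the relation $\Norm{\Delta P}_2\le\tfrac{\rho}{\beta+\rho}\Norm{\Vector{d}}_2$ (equality when both iterates land in the interior of $\HilbertSpace{S}$, strict otherwise) that ties the strong-convexity gain back to $\Vector{d}$. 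Substituting these should produce $\Norm{F_{\Vector{\theta}}(\Vector{z})-F_{\Vector{\theta}}(\Vector{z}')}_2\le c\,\Norm{\Vector{z}-\Vector{z}'}_2$ with $c$ an explicit function of $L,\beta,\rho$. I anticipate this being \emph{the} delicate step: the raw bound $\Norm{\Vector{d}}_2\le(1+L)\Norm{\Delta\Vector{\lambda}}_2$ on its own does not beat one, so the estimate must genuinely trade the strong-convexity gain against the dual expansion, possibly after measuring $\Vector{z}$ in a weighted norm adapted to the $(\Vector{x},\Vector{\lambda})$ scaling, or after passing to the equivalent Douglas--Rachford form in which $\Pi_{\HilbertSpace{S}}$ supplies a nonexpansive reflection and the strongly convex term a contractive one. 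This is where I would ``slightly modify'' the estimate of \cite{Ryu2019}, and where an auxiliary smallness relation between $L$ and $\beta/(\beta+\rho)$ may be needed.

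Finally, once $c<1$ is secured, $F_{\Vector{\theta}}$ is a contraction on $\mathbb{R}^n\times\mathbb{R}^n$, and the Banach fixed-point theorem yields a unique $\Vector{z}^{(\infty)}=(\Vector{x}^{(\infty)},\Vector{\lambda}^{(\infty)})$ toward which the iterates \eqref{eq:case1_fixed_point} converge geometrically from any initialization, which is exactly the claim. As a byproduct, the same constant controls the backward iteration \eqref{eq:grad_fixed_point}, since its linear map is governed by the Jacobian of $F_{\Vector{\theta}}$ at $\Vector{z}^{(\infty)}$, whose operator norm is at most $c<1$.
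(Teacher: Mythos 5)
Your componentwise bookkeeping is sound---the identities $\Delta\Vector{x}^{+}=\Delta P$ and $\Delta\Vector{\lambda}^{+}=\Vector{d}+\Delta P$, and your key squared-norm inequality, do check out via firm nonexpansiveness of $\Pi_{\HilbertSpace{S}}$---but the proof has a genuine gap exactly where you flagged it: the final contraction estimate is never established, and in fact it \emph{cannot} be in the form you set up. Under the literal hypothesis ``$R_{\Vector{\theta}}$ is $L$-Lipschitz with $L<1$,'' the map $F_{\Vector{\theta}}$ is not contractive in the plain Euclidean norm on $\Vector{z}=(\Vector{x},\Vector{\lambda})$. Counterexample: take $\varepsilon=0$, so $\HilbertSpace{S}=\{\Vector{x}:\Matrix{A}\Vector{x}=\Vector{b}\}$ and $\Pi_{\HilbertSpace{S}}$ is affine with linear part the projector onto the null space of $\Matrix{A}$; take $R_{\Vector{\theta}}=-L\,\Matrix{I}$ (which is $L$-contractive), $\Delta\Vector{x}=\Vector{0}$, and $\Delta\Vector{\lambda}=\Vector{v}$ in the row space of $\Matrix{A}$. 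Then $\Vector{d}=(1+L)\Vector{v}$ lies in the row space, so $\Delta\Vector{q}=-\tfrac{\rho}{\beta+\rho}\Vector{d}$ projects to $\Delta P=\Vector{0}$, and $\Norm{F_{\Vector{\theta}}(\Vector{z})-F_{\Vector{\theta}}(\Vector{z}')}_2=\Norm{\Vector{d}}_2=(1+L)\Norm{\Vector{v}}_2>\Norm{\Vector{z}-\Vector{z}'}_2$. Your inequality is tight in this direction (the strong-convexity dividend $-\tfrac{2\beta}{\rho}\Norm{\Delta P}_2^2$ vanishes), so no weighted-norm choice within your decomposition can extract $c<1$ from the stated hypothesis alone.

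The paper takes the route you mention only in passing: it rewrites the ADMM iterations as Douglas--Rachford splitting, $T=\tfrac12\Matrix{I}+\tfrac12(2\Pi_{\HilbertSpace{S}}-\Matrix{I})(2\Prox_{f/\rho}-\Matrix{I})$, observes that the reflection $2\Pi_{\HilbertSpace{S}}-\Matrix{I}$ is nonexpansive, and---this is the crux---\emph{assumes the reflected denoiser} $2R_{\Vector{\theta}}-\Matrix{I}$ is contractive; the paper says so explicitly right after the proof (since $\Pi_{\HilbertSpace{S}}$ here is merely nonexpansive, unlike the contractive data-fit prox in Ryu et al., ``we need to assume the last term of \eqref{eq:drs_fixed_point} is contractive''). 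That is a strictly stronger hypothesis than contractivity of $R_{\Vector{\theta}}$ (in my counterexample $2R_{\Vector{\theta}}-\Matrix{I}=-(2L+1)\Matrix{I}$), and it is what makes the theorem go through: with $C:=2R_{\Vector{\theta}}-\Matrix{I}$ being $c$-contractive, $T$ has Lipschitz constant at most $\tfrac12+\tfrac{c}{2}<1$ and Banach's theorem applies. Incidentally, if you adopt this corrected hypothesis, your own framework closes in two lines and even yields contractivity directly in the $(\Vector{x},\Vector{\lambda})$ coordinates, which the paper's change of variables leaves implicit: writing $R_{\Vector{\theta}}=\tfrac12(\Matrix{I}+C)$ gives $\Vector{d}=\tfrac12(\Delta\Vector{\lambda}-\Delta\Vector{x})-\tfrac12\Delta C$ with $\Norm{\Delta C}_2\le c\Norm{\Delta\Vector{x}+\Delta\Vector{\lambda}}_2$, and Cauchy--Schwarz together with the parallelogram identity $\Norm{\Delta\Vector{\lambda}-\Delta\Vector{x}}_2^2+\Norm{\Delta\Vector{\lambda}+\Delta\Vector{x}}_2^2=2\Norm{\Vector{z}-\Vector{z}'}_2^2$ gives $\Norm{\Vector{d}}_2\le\sqrt{(1+c^2)/2}\,\Norm{\Vector{z}-\Vector{z}'}_2$; your key inequality then delivers the contraction factor $\sqrt{(1+c^2)/2}<1$ before even spending the $-\tfrac{2\beta}{\rho}\Norm{\Delta P}_2^2$ term.
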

\begin{proof}
	As in \cite{Ryu2019}, the ADMM iterations in the forward pass are equivalent to
	Douglas-Rachford splitting iterations, which can be written as
	$\Vector{y}^{(k+1)} = T(\Vector{y}^{(k)})$ with
	\begin{equation}
		T = \frac{1}{2} \Matrix{I} + \frac{1}{2} \big(2 {\Pi}_{\HilbertSpace{S}} -
		\Matrix{I} \big) \Big(2 \Prox_{\frac{f}{\rho}} - \Matrix{I} \Big) \,.
		\label{eq:drs_fixed_point}
	\end{equation}
	The term $\big(2 {\Pi}_{\HilbertSpace{S}} - \Matrix{I} \big)$ is non-expansive,
	since ${\Pi}_{\HilbertSpace{S}}$ is a projection onto a nonempty, closed, and
	convex set $\HilbertSpace{S}$. If we assume $\big(2 \Prox_{f/\rho} -
	\Matrix{I} \big)$ is contractive, then $T$ becomes contractive as well. This is
	the case if we replace $\Prox_{f/\rho}$ by a contractive network
	$R_{\Vector{\theta}}$.  
\end{proof}

The proof differs from the one in~\cite{Ryu2019} in that 
$\Pi_{\HilbertSpace{S}}$ in their case is a contractive map. Thus, we need to
assume the last term of~\eqref{eq:drs_fixed_point} is contractive.
To make $R_{\Vector{\theta}}$ contractive, as in
\cite{Ryu2019}, we constrain the Lipschitz constant of $R_{\Vector{\theta}}$
by spectral normalization (SN) of its weights. In particular, we consider
$R_{\Vector{\theta}}$ to be a 6-layer CNN with 1-Lipschitz ReLU activation
functions, and apply SN to each convolutional layer, bounding their spectral norm below 1.

Likewise, \eqref{eq:grad_fixed_point} converges to a unique
fixed-point whenever the Jacobian $\partial_{\Vector{x}}
F_{\Vector{\theta}}(\Vector{x}; \cdot)$, evaluated at any $\Vector{x}$, has a
spectral norm strictly below 1. As shown in \cite{Gilton2021}, this is
equivalent to $F_{\Vector{\theta}}$ being contractive. Thus, a contractive
$R_{\Vector{\theta}}$ also ensures the convergence of
\eqref{eq:grad_fixed_point}.

\section{Experimental Results}
\label{sec:sr_results}
\vspace{-1mm}
To illustrate the effectiveness of the proposed MCNet, we focused on one imaging
application: single-image super-resolution.

\mypar{Experimental setup}
As backbone networks for MCNet, we selected the SOTA DNNs SRCNN \cite{Dong2016},
RDN \cite{Zhang2018residual} and RCAN \cite{Zhang2018}. SRCNN was designed for
single channel images, while RDN and RCAN support RGB color images. To make
performance assessment uniform across different methods, we designed MCNet to
operate on a single channel: Y from the YCbCr colorspace. Thus, MCNet takes in
only the Y channel from the outputs of RDN and RCAN. We used pretrained models of
these DNNs, obtained from the respective authors, and keep them fixed during the experiments. We also
included super-resolved (SR) outputs obtained by bicubic interpolation, TV-TV
minimization \cite{Vella2021b}, and a version of the PnP-ADMM method that
solves \eqref{eq:case1} with $\beta = 0$, and uses a DNCNN \cite{Ryu2019} with
17-layers as a denoiser. We tested all methods at scales 2, 3, and 4, on the
benchmark datasets Set5~\cite{Bevilacqua2012} and Set14~\cite{Zeyde2012}. 
Low-resolution (LR) inputs are generated from original high-resolution (HR)
images by bicubic interpolation. And we assess the quality of the algorithms'
outputs by computing PSNR and SSIM values on the Y channel image.

\mypar{Training MCNet}
We selected $R_{\Vector{\theta}}$ in MCNet as a DNCNN \cite{Ryu2019} with
6-layers, but without biases in the convolution layer and no batch
normalization. We will refer to it as DNCNN6. We observed that increasing the
number of layers significantly increases the computational cost, but does not
translate into observable gains in image quality. To train MCNet, we first
pretrained $R_{\Vector{\theta}}$ using 4 different DNCNN6 networks at different
levels of white Gaussian noise, and selected the one that performed the best on
Set5 with the PnP-ADMM method mentioned above. Using the selected DNCNN6, we
then searched for a good initial value for parameter $\beta$ using a coarse grid,
by solving \eqref{eq:case1} with the ADMM iterations in \eqref{eq:admm_iters}.
This pretraining of $R_{\Vector{\theta}}$ and $\beta$ enabled training MCNet
successfully in a few number of epochs, $40$. For training, we followed the
methodology in~\cite{Ryu2019}. In particular, the contractiveness of
$R_{\Vector{\theta}}$ was ensured by applying SN to all layers. MCNet was
trained on image patches of the dataset T91 and validated on Set5. 
Adam algorithm \cite{Kingma2014} was applied with an initial learning-rate
$10^{-4}$, which was reduced to $10^{-5}$. Finally, we mention that MCNet+SRCNN
(i.e., MCNet using SRCNN as a backbone)
was trained with $\varepsilon = 0$, whereas MCNet+RDN and MCNet+RCAN were
trained with $\varepsilon = 0.1$ [cf.\ the definition of $\HilbertSpace{S}$
after~\eqref{eq:implicit_problem}]. 
In fact, we observed that decreasing 
$\varepsilon$ slightly decreases the performance of MCNet. Thus,
there is a trade-off between ensuring measurement consistency and obtaining
good image quality.
Other details on the training and code to replicate
our experiments will be available
online.\footnote{https://github.com/mouryarahul/SR-MCNet}.

\mypar{Results}
Tables \ref{table:consistency} and \ref{table:quality_performance} present the
results of our experiments, with our method being denoted as MCNet+Fixed-DNN. 
Table \ref{table:consistency} shows the average data fidelity,
$\Norm{\Matrix{A}\widehat{\Vector{x}} - \Vector{b}}$, of several methods
evaluated on Set5. It can be observed that methods based on constrained
optimization, including MCNet, have consistency values much better than SOTA,
end-to-end DNNs. Specifically, MCNet+SRCNN, which uses $\varepsilon = 0$,
yields a consistency value akin to TV-TV and PnP, whereas when we relax
$\varepsilon$ to $0.1$, consistency values are similar to
end-to-end networks.

Better measurement consistency, however, does not always translate into better
image quality, as shown in Table \ref{table:quality_performance}, which depicts
PSNR and SSIM values. For example, for $\times 4$ scaling (3rd block), PnP is
outperformed by SRCNN. Yet, the table shows that, in all but one case, MCNet
outperforms all the other methods, sometimes by a large margin.  

\begin{table}[!t]
	\small
	\caption{Average data-fidelity $\Norm{\Matrix{A} \widehat{ \Vector{x}} -
			\Vector{b}}_{2}$, where $\widehat{\Vector{x}}$ is the output
		of an algorithm, and $\Vector{b}$ the input. Values for $\times 2$
		scaling on dataset Set5. The smaller the values, the better. Similar values are obtained on dataset Set14.}
	\centering
	\begin{tabular}{@{}lll@{}}
		\specialrule{1pt}{1pt}{1pt}
		\textbf{Method} & \textbf{Scale} & $\Norm{\Matrix{A} \widehat{ \Vector{x}} - \Vector{b}}_{2}$ \\ 
		\midrule
		Bicubic & $\times 2$ & $1.2652\times 10^{0}$  \\ 
		PnP & $\times 2$ & $2.2167\times 10^{-5}$ \\ 
		SRCNN\cite{Dong2016} & $\times 2$ & $3.0626\times 10^{-1}$ \\ 
		RDN\cite{Zhang2018residual} & $\times 2$ & $8.3431\times 10^{-2}$  \\ 
		RCAN\cite{Zhang2018} & $\times 2$ & $8.5996\times 10^{-2}$ \\ 
		TVTV \cite{Vella2021b}+SRCNN & $\times 2$ & $7.4337\times 10^{-6}$ \\ 
		TVTV+RDN & $\times 2$ & $7.0079\times 10^{-6}$\\ 
		TVTV+RCAN & $\times 2$ & $7.1031\times 10^{-6}$ \\ 
		\textbf{\textcolor{myred}{MCNet+SRCNN}} & $\times 2$ & $2.6276\times 10^{-5}$ \\ 
		\textbf{\textcolor{myred}{MCNet+RDN}} & $\times 2$ & $4.1483\times 10^{-1}$\\ 
		\textbf{\textcolor{myred}{MCNet+RCAN}} & $\times 2$ & $4.4271\times 10^{-1}$ \\ 
		\specialrule{1pt}{1pt}{1pt}
	\end{tabular}
	\vspace{-4mm}
	\label{table:consistency}
\end{table}

\begin{table}[!t]
	\small
	\caption{Average PSNR (SSIM) in dB on datasets Set5 and Set14, for $\times
		2$, $\times 3$, and $\times 4$ scaling. The higher the values, the
		better. The table compares 11 different methods and best values are highlighted.}
	\centering
	\begin{tabular}{@{}llll@{}}
		\specialrule{1pt}{1pt}{1pt}
		\textbf{Method} & \textbf{Scale} & \textbf{Set5} & \textbf{Set14} \\ \midrule
		Bicubic & $\times 2$ & 33.6890 (0.9375) & 30.2327 (0.8834) \\ 
		PnP & $\times 2$ & \textbf{37.4337 (0.9814)} & {32.8681} (0.9496) \\ 
		SRCNN & $\times 2$ & 36.6530 (0.9597) & 32.5932 (0.9169) \\ 
		TVTV+SRCNN & $\times 2$ & 36.7813 (0.9607) & 32.7086 (0.9185) \\ 
		\textbf{\textcolor{myred}{MCNet+SRCNN}} 
		& $\times 2$ & {37.2272 (0.9623)} & \textbf{32.8980 (0.9195)} \\ \hline
		
		Bicubic & $\times 3$ & 30.8867 (0.8941) & 27.8792 (0.8110) \\ 
		PnP & $\times 3$ & 33.6468 ({0.9650}) & 29.9157 (0.9077) \\ 
		SRCNN & $\times 3$ & 33.3614 (0.9323) & 29.9630 (0.8540) \\ 
		TVTV+SRCNN & $\times 3$ & 33.4805 (0.9343) & 30.0659 (0.8569) \\ 
		\textbf{\textcolor{myred}{MCNet+SRCNN}}
		& $\times 3$ & \textbf{34.0458 (0.9401)} & \textbf{30.2655 (0.8604)} \\ \hline
		
		Bicubic & $\times 4$ & 28.4337 (0.8234) & 25.9724 (0.7260) \\ 
		PnP & $\times 4$ & 30.2903 (0.9262) & 27.4729 (0.8539) \\ 
		SRCNN & $\times 4$ & 30.3270 (0.8726) & 27.6146 (0.7724) \\ 
		TVTV+SRCNN & $\times 4$ & 30.4829 (0.8771) & 27.7241 (0.7770) \\ 
		\textbf{\textcolor{myred}{MCNet+SRCNN}}
		& $\times 4$ & \textbf{31.0690 (0.8888)} & \textbf{27.9799 (0.7839)} \\ 
		\specialrule{1pt}{1pt}{1pt}
		
		RDN & $\times 2$ & 38.0434 (0.9657) & 33.8416 (0.9277) \\ 
		TVTV+RDN & $\times 2$ & 38.0449 (0.9657) & 33.8351 (0.9278) \\ 
		\textbf{\textcolor{myred}{MCNet+RDN}}
		& $\times 2$ & \textbf{38.1246 (0.9657)} & \textbf{33.8765  (0.9277)} \\ \hline
		
		RDN & $\times 3$ & 35.2322 (0.9487) & 31.1424 (0.8749) \\ 
		TVTV+RDN & $\times 3$ & 35.2339 (0.9487) & 31.1510 (0.8749) \\ 
		\textbf{\textcolor{myred}{MCNet+RDN}}
		& $\times 3$ & \textbf{35.2808 (0.9486)} & \textbf{31.1903 (0.8749)} \\ \hline
		
		RDN & $\times 4$ & 32.3311 (0.9069) & 28.8365 (0.8050) \\ 
		TVTV+RDN & $\times 4$ & 32.3399 (0.9069) & 28.8407 (0.8047) \\ 
		\textbf{\textcolor{myred}{MCNet+RDN}}
		& $\times 4$ & \textbf{32.3452 (0.9068)} & \textbf{28.8458 (0.8050)} \\
		\specialrule{1pt}{1pt}{1pt}
		
		RCAN & $\times 2$ & 38.1996 (0.9662) & 34.2138 (0.9304) \\ 
		TVTV+RCAN & $\times 2$ & 38.2041 (0.9662) & 34.1978 (0.9304) \\ 
		\textbf{\textcolor{myred}{MCNet+RCAN}}
		& $\times 2$ & \textbf{38.2838 (0.9662)} & \textbf{34.2512 (0.9304)} \\ \hline
		
		RCAN & $\times 3$ & 35.4484 (0.9501) & 31.3028 (0.8774) \\ 
		TVTV+RCAN & $\times 3$ & 35.4548 (0.9501) & 31.3090 (0.8772) \\ 
		\textbf{\textcolor{myred}{MCNet+RCAN}}
		& $\times 3$ & \textbf{35.5024 (0.9500)} & \textbf{31.3463 (0.8773)} \\ \hline
		
		RCAN & $\times 4$ & 32.6554 (0.9099) & 28.9400 (0.8076) \\ 
		TVTV+RCAN & $\times 4$ & \textbf{32.6655 (0.9099)} & 28.9579 (0.8073) \\ 
		\textbf{\textcolor{myred}{MCNet+RCAN}}
		& $\times 4$ & 32.6653 (0.9094) & \textbf{28.9782 (0.8073)} \\ 
		\specialrule{1pt}{1pt}{1pt}
	\end{tabular}
	\vspace{-3mm}
	\label{table:quality_performance}
\end{table}

\section{Conclusions}
\label{sec:conclusion}
We proposed a framework to transform any existing DNN for solving inverse
problems into a measurement-consistent network, when the acquisition model is known precisely. 
The core of the framework is an implicit layer defined by a constrained
optimization problem, which is solved by an ADMM algorithm with learnable
parameters. Extensive experimental results on a single-image super-resolution
problem show superior performance with respect to end-to-end DNNs, plug-and-play, and
other methods. Although the proposed method requires solving an optimization
problem during training and inference, the additional computational cost is
justified in applications that require precise fidelity to measurements, for
example, in areas in
medicine, astronomy or defence. Future work includes exploring different
applications and different algorithmic choices, e.g., using a shallow CNN
instead of an $\ell_2$-norm to integrate the output of the fixed-DNN into our
method.


\small
\bibliographystyle{IEEEbib}
\bibliography{references}

\end{document}